\definecolor{stroke1}{HTML}{2574A9}
\newtheorem{theorem}{Theorem}
\newcommand*{\ignore}[1]{}
\let\originalleft\left
\let\originalright\right
\renewcommand{\left}{\mathopen{}\mathclose\bgroup\originalleft}
\renewcommand{\right}{\aftergroup\egroup\originalright}
\newcommand*{\N}{\mathds{N}}
\newcommand*{\R}{\mathds{R}}
\newcommand*{\OM}{\textsc{OneMax}\xspace}
\newcommand*{\E}{\mathrm{E}}
\newcommand*{\UBM}{\textsc{EqualBlocksOneMax}\xspace}
\newcommand*{\ubm}{\textsc{EBOM}\xspace}
\newcommand*{\hammingDistance}[2]{\mathrm{H}\left(#1, #2\right)}
\begin{document}

\begin{frontmatter}
    \title{Bivariate Estimation-of-Distribution Algorithms Can Find an Exponential Number of Optima}

    \author{Benjamin Doerr}

    \author{Martin~S. Krejca\corref{cor1}}
    \ead{martin.krejca@polytechnique.edu}

    \cortext[cor1]{%
        Corresponding author\\
        Published version available here: \url{https://doi.org/10.1016/j.tcs.2023.114074}
    }

    \address{Laboratoire d'Informatique (LIX), CNRS, Ecole Polytechnique, Institut Polytechnique de Paris, Palaiseau, France}

    \begin{abstract}
        Finding a large set of optima in a multimodal optimization landscape is a challenging task. Classical population-based evolutionary algorithms typically converge only to a single solution. While this can be counteracted by applying niching strategies, the number of optima is nonetheless trivially bounded by the population size.
        Estimation-of-distribution algorithms (EDAs) are an alternative, maintaining a probabilistic model of the solution space instead of a population. Such a model is able to implicitly represent a solution set far larger than any realistic population size.

        To support the study of how optimization algorithms handle large sets of optima, we propose the test function \UBM (\ubm). It has an easy fitness landscape with exponentially many optima. We show that the bivariate EDA \emph{mutual-information-maximizing input clustering}, without any problem-specific modification, quickly generates a model that behaves very similarly to a theoretically ideal model for \ubm, which samples each of the exponentially many optima with the same maximal probability.
        We also prove via mathematical means that no univariate model can come close to having this property: If the probability to sample an optimum is at least inverse-polynomial, there is a Hamming ball of logarithmic radius such that, with high probability, each sample is in this ball.
    \end{abstract}

    \begin{keyword}
        estimation-of-distribution algorithms \sep probabilistic model building \sep multimodal optimization
    \end{keyword}
\end{frontmatter}

\section{Introduction}
\label{sec:introduction}

A key feature of evolutionary algorithms (EAs) is their applicability to a wide range of optimization problems. EAs require little problem-specific knowledge and generally provide the user with a good solution. Since many real-world optimization problems are multimodal \cite{BeldaMTLG07MulitmodalSearch,HocaogluS97MultimodalOptimization,SinghD06NichingMethods}, it is desirable for an EA to return multiple solutions. This way, the user also gains precious insight into their problem.

Unfortunately, classical population-based EAs tend to converge to a single solution, due to strong selection operators and due to a long-known phenomenon called \emph{genetic drift}~\cite{DeJong75GeneticDrift}. In order to counteract this behavior, different techniques have been introduced, commonly subsumed under the term \emph{niching}~\cite{SinghD06NichingMethods,Mahfoud96Niching,MillerS96Niching}. These techniques maintain diversity in the population and assist in finding and keeping multiple good solutions. While this approach is useful for increasing the number of different solutions, it still limits the insights gained about the underlying problem, as the only information the EA returns is the solutions themselves. As such, it only provides information about areas of the search space that it has visited and does not propose further promising regions.

A different algorithmic approach that aims to additionally incorporate information about the entire search space is the framework of \emph{estimation-of-distribution algorithms} (EDAs; \cite{PelikanHL15SurveyOnEDAs}). Instead of an explicit set of solutions, EDAs maintain a probabilistic model of the search space. This model acts as a solution-generating mechanism and reflects information about which parts of the search space seem more favorable than others. An EDA evolves its model based on samples drawn from it. This way, the model is refined such that it generates better solutions with higher probability. In the end, an EDA returns the best solutions found as well as its model.

EDAs are commonly classified by the power of their model~\cite{LarranagaL02EDAs,PelikanHL15SurveyOnEDAs}. This results in the following trade-off: an EDA with a simple model performs an update quickly but may be badly suited to accurately represent the distribution of good solutions. In contrast, the update of an EDA with a complex model is computationally expensive, but the model is better capable of representing good solutions. The complexity of a model is determined by how many dependencies it can detect among different problem variables. For example, a \emph{univariate} EDA assumes independence of all problem variables, whereas a \emph{bivariate} EDA can represent dependencies among pairs of variables. We go into detail about these types of EDAs in \Cref{sec:EDAsProbabilisticModels}.

While increasing the complexity of an EDA's model is useful for finding optima in a larger class of problems~\cite{PelikanG03hBOAperformsWell}, it is not evident that an increased model complexity is also useful for finding \emph{multiple optima} or representing them adequately in the model. In fact, EDAs have been designed specifically with the intention of being used for multimodal optimization. Pe\~{n}a et~al.~\cite{PenaLL05EDAsFindingMultipleOptima} introduce the \emph{unsupervised estimation of Bayesian network algorithm} (UEBNA), which uses unsupervised learning in order to generate the Bayesian network of its model. The algorithm is tested against other EDAs and evaluated (mostly) on bisection problems on graphs with many symmetries that only have a low number of optima (two to six). Interestingly, for the larger problems, even UEBNA is not able to find all optima. Thus, the test functions seem to be hard, and the experiments do not only show how many optima the algorithm can find but also how well it copes with hard problems.

A similar setting has been considered by Chuang and Hsu~\cite{ChuangH10MultimodalEDA}, who introduce an EDA that is also specifically tailored toward multimodal optimization. However, they evaluate their results only on trap functions with a low number of optima (two to four). Thus, the focus of their work is arguably also more on the hardness of the problem than on finding many optima.

Hauschild et~al.~\cite{HauschildPLS07hBOAprobabilisticModelAnalysis} consider the \emph{hierarchical Bayesian optimization algorithm} and analyze how well its model reflects the problem structure of two hard test functions. They show that the structure is best reflected during the middle of the run and that it is then simplified toward the end. This makes sense, as the model aims to reflect best how to generate \emph{optimal} solutions. This does not need to coincide with how the \emph{entire} structure can be reflected. For example, if the problem has a single solution, it suffices to have a simple model that only generates this solution in a straightforward way. Again, the focus of the authors is rather based on the hardness of the problem (its structure) instead of representing many optima.

Overall, to the best of our knowledge, all results dedicated to finding many different optima consider functions for which finding an optimum at all is already a challenge.
We note though that Echegoyen et~al.~\cite{EchegoyenMSL12AnalyzingProbabilisticModels} thoroughly analyze how the probabilistic model of the EDA called \emph{estimation of Bayesian network algorithm} (EBNA~\cite{EtxeberriaL99EBNA}) evolves on unimodal and on hard problems, such as traps or \textsc{MaxSat}---a task related to what we set out to analyze in this article.

We introduce the test function \UBM (\ubm, \Cref{sec:UBM}), which has an exponential number of optimal solutions. It is easy in the sense that all local optima are also global optima. We are interested in how well the underlying structure of the optima can be detected by an algorithm.\footnote{Our code is available on GitHub~\cite{OurCode}.}

Univariate EDAs apparently are not suitable to return a model that represents the exponential number of optima of \ubm. We make this intuitive statement precise in \Cref{sec:theory}, where we show that any univariate model with an at least moderate probability to sample an optimum of \ubm has the property that its samples are highly concentrated in a Hamming ball of only logarithmic radius (\Cref{thm:number_of_optima_univariate}).


In contrast, we show that \emph{mutual-information-maximizing input clustering} (MIMIC; \cite{BonetIV96MIMIC}), arguably the simplest bivariate EDA, represents the structure of \ubm well. It builds a model that behaves very similarly to an ideal model for \ubm, which creates all optimal solutions with the maximal probability possible. Our experiments (\Cref{sec:experiments}) show that, for almost all input sizes we consider, MIMIC samples about $1 \cdot 10^4$ to $4.5 \cdot 10^4$ optima per run and never samples an optimum twice. As \ubm can be described by a bivariate model, our results suggest that bivariate EDAs are well suited to reasonably capture the set of all optima for functions they can optimize.

Following, we discuss different types of EDAs in order to explain how common probabilistic models look like and why univariate models are unsuited for representing multiple optima. In \Cref{sec:preliminaries}, we present MIMIC, the definition of \ubm, and what an ideal model for \ubm is. In \Cref{sec:experiments}, we explain our test setup and discuss our results. We complement our findings in \Cref{sec:theory}, where we show that univariate EDAs are not capable of representing such a large number of optima. We conclude our paper in \Cref{sec:conclusion}.
This paper extends our prior work~\cite{DoerrK20MIMICforGECCO} via the results from \Cref{sec:theory}.

\subsection{Types of EDAs}
\label{sec:EDAsProbabilisticModels}

A common way of classifying EDAs is with respect to how they decompose a problem~\cite{PelikanHL15SurveyOnEDAs}. Such a decomposition is typically based on representing a probability distribution over the search space as a product of various probabilities that may share dependencies.
One possible and commonly applied way of doing so dates back to Henrion~\cite{Henrion1988ProbabilisticLogicSampling}, who suggested to store this information compactly by a probabilistic graphical model (PGM; \cite{KollerF09GraphicalModels}). As the name suggests, PGMs use graphs for storing information about probability distributions, where variables are represented as nodes and dependencies as edges. The arguably best known type of PGM are Bayesian networks (BNs).

A BN can be represented as a directed acyclic graph.
A directed edge from~$x$ to~$y$ represents that~$y$ is dependent on (at least)~$x$.
Consequently, two nodes that cannot reach each other mutually are conditionally independent with respect to their common predecessors.
Further, each node stores a probability distribution conditional on the outcomes of all of its predecessors.
This is usually done via a probability table for each node, where the probability for each possible value is stored for each possible combination of the outcomes of the predecessors.

A solution according to the probability distribution of a BN can be sampled by traversing the graph in a topological order, always determining the outcome of an input variable based on the outcome of its predecessors, utilizing the probability tables. The larger the in-degree of a node in a BN can become, the more costly it is to represent the model, as the conditional probability distribution for each node can grow quite large. Thus, the number of dependencies in the models of EDAs are usually restricted.

\subsubsection{Univariate EDAs}
The BN of a univariate EDA is an independent set. That is, each node represents a probability distribution based solely on a single variable. Hence the name \emph{uni}-variate. Examples of univariate EDAs are the \emph{compact genetic algorithm}~\cite{HarikLG99cGA} and the \emph{univariate marginal distribution algorithm}~\cite{MuehlenbeinP96UMDA}.

When optimizing functions over bit strings, the probability of each binary input variable tends to either~$0$ or~$1$ rather quickly~\cite{FriedrichKK16BalancedStable,DoerrZ20GeneticDrift}, forcing the model to put its probability mass onto a single solution. Thus, univariate EDAs are ill-suited to represent multiple solutions at once. For more theoretical investigations on this topic, please refer to a recent survey by Krejca and Witt~\cite{KrejcaW20EDAsurvey}.

\subsubsection{Bivariate EDAs}
In a bivariate EDA, each problem variable can be dependent on at most one other variable. Examples of bivariate EDAs are \emph{mutual-information-maximizing input clustering}~\cite{BonetIV96MIMIC} and the \emph{bivariate marginal distribution algorithm}~\cite{PelikanM99BMDA}.

Recently, Lehre and Nguyen~\cite{LehreN19UMDAonDeception} showed that MIMIC may have a huge advantage over univariate EDAs on deceptive functions, but this may be a consequence of a suboptimal parameter choice~\cite{DoerrK21UMDAwellOnDeception}.

Since a bivariate model can store simple dependencies, it is capable to represent multiple solutions at once. Further, the model can still be built somewhat efficiently, as there is at most a quadratic number of possible dependencies to consider when building the model. Thus, we focus on bivariate EDAs in this work.

\subsubsection{Multivariate EDAs}
This type is used as an umbrella term for any type of EDA that is able to represent some form of dependency. While the models of such EDAs can perform well on deceptive, hard functions, creating a model can be computationally expensive, as potentially many dependencies need to be checked. Examples of multivariate EDAs are the \emph{extended compact genetic algorithm}~\cite{Harik99ECGA}, the \emph{factorized distribution algorithm which learns a factorization}~\cite{MuehlenbeinM1999FDA}, the aforementioned EBNA~\cite{EtxeberriaL99EBNA}, as well as the \emph{Bayesian optimization algorithm}~\cite{PelikanGCP1999BOA} and the \emph{hierarchical Bayesian optimization algorithm}~\cite{PelikanG01hBOA}.

\section{Preliminaries}
\label{sec:preliminaries}

In this section, we introduce some notation that we use throughout the paper as well as the algorithm and the test function that we consider in our analysis in \Cref{sec:experiments}.

\subsection{Notation}

Let~$\N$ denote the set of all natural numbers, including~$0$. For $a, b \in \N$, let $[a .. b] \coloneqq [a, b] \cap \N$ denote the set of all natural numbers from~$a$ to~$b$ (including both bounds). As a special case of that notation, for $b \in \N$, let $[b] \coloneqq [1 .. b]$ denote the set of all positive natural numbers up to~$b$. For an $n \in \N$, let $\mathrm{id}_n$ denote the identity function over $[n]$.

For a logical proposition~$A$, let $\mathds{1}\{A\}$ denote the indicator function of the truth value of~$A$, that is, $\mathds{1}\{A\} = 1$ if~$A$ is true, and it is~$0$ otherwise.

We consider pseudo-Boolean optimization, that is, optimization of functions $f\colon \{0, 1\}^n \to \R$, where $n \in \N$. We call such a function \emph{fitness function.} We call a bit string $x \in \{0, 1\}^n$ an \emph{individual} and $f(x)$ the \emph{fitness of~$x$.} If not stated otherwise, let~$f$ always denote a fitness function, and let~$n$ always denote its dimension.

\subsection{Mutual-Information-Maximizing Input Clustering (MIMIC)}

Mutual-information-maximizing input clustering (MIMIC; \cite{BonetIV96MIMIC}) is a bivariate estimation-of-distribution algorithm (EDA). The Bayesian network of the probabilistic model of MIMIC can be represented as a directed path over~$n$ nodes, where each of the nodes corresponds to one of the~$n$ bit positions of~$f$. Further, MIMIC has two parameters, $\lambda, \mu \in \N$ with $\lambda \geq \mu$, that represent how many individuals are generated and selected each iteration, respectively.

Initially, the model represents the uniform distribution. It is rebuilt each iteration in the following way: first, $\lambda$ individuals are generated according to the current model, and $\mu$ individuals are selected according to some selection mechanism. We call the resulting (multi-)set~$S$. A path is constructed greedily based on the entropy of the distribution of the bits at the different positions in~$S$.

The first node of the new path is a position with the lowest entropy, that is, a position with the largest number of~$1$s or~$0$s. Each subsequent node is chosen with respect to the lowest entropy conditional on the distribution of the current last node in the path. This way, the new path represents a model that best reflects the distributions of pairs of positions observed in~$S$. We now go into detail about our implementation of MIMIC (\Cref{alg:mimic}).

\begin{algorithm}[t]
    \caption{\label{alg:mimic} MIMIC~\cite{BonetIV96MIMIC} with parameters~$\mu$ and~$\lambda$, $\mu \leq \lambda$, and a selection scheme $\mathrm{select}_\mu$, optimizing a fitness function $f\colon \{0, 1\}^n \to \R$ with $n \geq 2$.}

    $t \gets 0$\;
    $\pi^{(t)} \gets \mathrm{id}_n$\;
    $P^{(t)} \gets (\frac{1}{2})_{i \in [n], b \in \{0, 1\}}$\;

    \Repeat
    {\emph{termination criterion met}}
    {
        $O^{(t)} \gets \emptyset$\;
        \For{$i \in [\lambda]$}
        {
            $x^{(i)} \sim \mathrm{sample}_{\pi^{(t)}} \big(P^{(t)}\big)$\;
            $O^{(t)} \gets O^{(t)} \cup \{x^{(i)}\}$\;
        }
        $S^{(t)} \gets \mathrm{select}_\mu(O^{(t)}, f)$\;

        $I \gets [n]$\;
        $\pi^{(t + 1)}(1) \gets \arg\min_{i \in I} h[S^{(t)}; i]$\;
        $I \gets I \setminus \{\pi^{(t + 1)}(1)\}$\;

        \lFor{$b \in \{0, 1\}$}
        {
            $P^{(t + 1)}_{\pi^{(t + 1)}(1), b} \gets \gamma_1[S^{(t)}; \pi^{(t + 1)}(1)]$%
        }
        \For{$j \in [2 .. n]$}
        {
            $\pi^{(t + 1)}(j) \gets \arg\min_{i \in I} h[S^{(t)}; i \mid \pi^{(t + 1)}(j - 1)]$\;
            $I \gets I \setminus \{\pi^{(t + 1)}(j)\}$\;
            \lFor{$b \in \{0, 1\}$}
            {
                $P^{(t + 1)}_{\pi^{(t + 1)}(j), b} \gets \gamma_{1b}[S^{(t)}; \pi^{(t + 1)}(j) \mid \pi^{(t + 1)}(j - 1)]$%
            }
        }
        restrict all values of~$P^{(t + 1)}$ to the interval $[\tfrac{1}{n}, 1 - \tfrac{1}{n}]$\;
        $t \gets t + 1$\;
    }
\end{algorithm}

\subsubsection{Probabilistic model and sampling}
\label{sec:MIMICimplementation}

For our implementation of MIMIC, we describe the probabilistic model via a permutation~$\pi$ (over~$[n]$) and an $n \times 2$ matrix of probabilities. Bit strings are sampled bit by bit in the order of~$\pi$. For a position $i \in [2 .. n]$ and a bit value $b \in \{0, 1\}$, an entry $P_{\pi(i), b}$ denotes the probability to sample a~$1$ at position~$\pi(i)$, given that the bit at position~$\pi(i - 1)$ is~$b$. Note that entries in~$P$ always denote the probability to sample a~$1$. For the position~$\pi(1)$ (which does not have a predecessor in~$\pi$), we set $P_{\pi(1), 0} = P_{\pi(1), 1}$. Thus, either entry denotes the probability to sample a~$1$ without a prior.

For a bit string $x \in \{0, 1\}^n$, we write $x \sim \mathrm{sample}_{\pi}(P)$ to denote that~$x$ is being sampled with respect to the probabilistic model consisting of~$\pi$ and~$P$. More formally the sampling procedure creates~$x$ such that, for any bit string $y \in \{0, 1\}^n$,
\begin{align*}
    \Pr[x = y] &= (P_{\pi(1), 0})^{y_{\pi(1)}} \cdot (1 - P_{\pi(1), 0})^{1 - y_{\pi(1)}}\\
    &\quad \cdot \prod_{\substack{i \in [2 .. n]\colon\\y_{\pi(i)} = 0}} (1 - P_{\pi(i), y_{\pi(i - 1)}}) \cdot \hspace*{-1 em}\prod_{\substack{i \in [2 .. n]\colon\\y_{\pi(i)} = 1}} P_{\pi(i), y_{\pi(i - 1)}}.
\end{align*}

\subsubsection{Selection}

Given a population $O \subseteq \{0, 1\}^n$ of individuals and a fitness function~$f$, we write $\mathrm{select}_\mu(O, f)$ to denote a selection mechanism that selects~$\mu$ individuals from~$O$. In this paper, we use \emph{truncation selection,} that is, we sort the individuals in~$O$ by fitness and then select the~$\mu$ best individuals (breaking ties uniformly at random).

\subsubsection{Building the probabilistic model}
\label{sec:modelBuilding}

When constructing a new probabilistic model, MIMIC makes use of the unconditional and conditional (empirical) entropy of a set of bit strings. These mathematical functions make use of the relative occurrences of bit values. To this end, for a population $S \subseteq \{0, 1\}^n$, a position $i \in [n]$, and a bit value $b \in \{0, 1\}$, let the frequency of~$b$ at position~$i$ in~$S$ be
\[
    \gamma_b[S; i] = \frac{1}{|S|} \sum_{x \in S} \mathds{1}\{x_i = b\}.
\]
Further, for a population $S \subseteq \{0, 1\}^n$, two positions $i, j \in [n]$, and two bit values $b_1, b_2 \in \{0, 1\}$, we define the conditional frequency of~$b_1$ at position~$i$ in~$O$ conditional on the value~$b_2$ at position~$j$ by
\begin{align*}
    &\gamma_{b_1 b_2}[S; i \mid j] =\\
    &\hspace*{3 em}
    \begin{cases}
        \frac{1}{2} & \hspace*{-3 em}\textrm{if } \gamma_{b_2}[S; j] = 0,\\
        \frac{1}{|S| \cdot \gamma_{b_2}[S; j]} \sum_{x \in S} \mathds{1}\{x_i = b_1 \land x_j = b_2\}  & \hfill\textrm{else.}
    \end{cases}
\end{align*}
Note that the case $\gamma_{b_2}[S; j] = 0$ means that the event we condition on has a probability of~$0$, which is not well defined. In order to represent our lack of knowledge in this case, we choose~$\frac{1}{2}$ as the value for the respective probability, which corresponds to a uniform distribution.

We now define the (empirical) entropy functions that MIMIC utilizes. To this end, we define that $0 \cdot \log_2(0) = 0$. For a population $S \subseteq \{0, 1\}^n$ and a position $i \in [n]$, the entropy at position~$i$ in~$S$ is
\[
    h[S; i] = - \sum_{\mathclap{b \in \{0, 1\}}} \gamma_b[S; i] \cdot \log_2(\gamma_b[S; i]).
\]
Further, for a population $S \subseteq \{0, 1\}^n$ and two positions $i, j \in [n]$, the entropy at position~$i$ in~$O$ conditional on position~$j$ is
\[
    h[S; i \mid j] = -\sum_{\mathclap{(b_1, b_2) \in \{0, 1\}^2}} \gamma_{b_1 b_2}[S; i \mid j] \cdot \gamma_{b_2}[S; j] \cdot \log_2(\gamma_{b_1 b_2}[S; i \mid j]).
\]

Given these definitions and a population $S \subseteq \{0, 1\}^n$ of selected individuals, MIMIC builds a new model by constructing a new permutation~$\pi'$ and updating the probabilities in~$P$ with respect to~$\pi'$. The permutation~$\pi'$ is built in the following iterative and greedy fashion, breaking ties uniformly at random: for the first position, an index with the lowest entropy in~$S$ is chosen. Each subsequent position is determined by an index with the lowest entropy in~$S$ conditional on the previous index in~$\pi'$.

Each time that a new position~$i$ is determined for~$\pi'$, the probabilities~$P_{i, 0}$ and~$P_{i, 1}$ are updated. If $i = \pi'(1)$, both~$P_{i, 0}$ and~$P_{i, 1}$ are set to the relative number of~$1$s at position~$i$ in~$S$, that is $\gamma_1[S; i]$. If $i \neq \pi'(1)$, that is, there is a preceding position~$j$ in~$\pi'$, for a bit value~$b$, the probability~$P_{i, b}$ is set to the relative number of~$1$s at position~$i$ in~$S$ that also have a value of~$b$ at position~$j$. Note that this is equivalent to setting $P_{i, b}$ to $\gamma_{1b}[S; i \mid j]$.

In order to circumvent the model from sampling only~$0$s or only~$1$s at some position, we make sure that no probability is~$0$ or~$1$. We enforce this after building~$\pi'$ and updating~$P$ by increasing probabilities less than~$\frac{1}{n}$ to~$\frac{1}{n}$ and by decreasing probabilities greater than $1 - \frac{1}{n}$ to $1 - \frac{1}{n}$. We may also say that we restrict~$P$ to the interval $[\frac{1}{n}, 1 - \frac{1}{n}]$.

Note that restricting the probabilities makes it necessary to define a value for the first case in the definition of $\gamma_{b_1 b_2}$, since it can happen that $\gamma_{b_2}[S; j] = 0$ in~$S$, but the corresponding probability is not~$0$, as it is restricted to $[\frac{1}{n}, 1 - \frac{1}{n}]$. In such a case, it is possible to sample~$b_2$ with the new model, making it necessary to define the probability $P_{i, b_2}$.

\subsection{\UBM (\ubm)}
\label{sec:UBM}

Many benchmark functions test an algorithm's capability of finding an optimal solution at all. Hence, they are commonly composed of deceptive or otherwise hard landscapes with many dependencies. In order to reduce the probability of finding an optimal solution by pure chance, the number of optima of such a function is usually small. For EDAs, it is not only interesting how fast they find good solutions but also how well their probabilistic model represents the distribution of good solutions in the search space.

To this end, we introduce the test function \ubm. It represents a fairly simple hill-climbing landscape, similar to that of the well-known \OM function (the sum of all bit values in an individual), but features an exponential number of optima. Thus, finding a single optimal solution is easy, but exploiting the structure of \ubm and being able to generate a large number of \emph{different} optima is challenging.

\subsubsection{Definition}
Given a bit string of length~$n$, \ubm operates on blocks of size~$2$ and returns the number of blocks that are either~$00$ or~$11$. Let~$n$ be even. For each $j \in [\frac{n}{2}]$, let the pair of positions $2j - 1$ and $2j$ denote block~$j$. For an individual $x \in \{0, 1\}^n$, we say that block~$j$ is \emph{correct} if the bits in block~$j$ have identical values. The objective of \ubm is to maximize the number of correct blocks. Formally, for all $x \in \{0, 1\}^n$,
\[
    \ubm(x) = \sum_{j \in [n/2]} \mathds{1}\{x_{2j - 1} = x_{2j}\}.
\]

Consequently, \ubm has a maximal fitness of~$\frac{n}{2}$ and $2^{n/2}$ different optima, since there are two possibilities for each of the $\frac{n}{2}$ blocks to be correct.

\subsection{An Ideal Model of MIMIC for \ubm}
\label{sec:idealModel}
We are interested in a model of MIMIC that generates each optimal solution of \ubm with the same maximal probability. We call such a model \emph{ideal.}

The permutation~$\pi$ of an ideal model is such that, for each block $j \in [\frac{n}{2}]$ of \ubm, the positions $2j - 1$ and $2j$ are adjacent in~$\pi$ (but in any order). In the following, assume without loss of generality that $\pi(2j - 1) < \pi(2j)$, that is, position $2j - 1$ occurs before $2j$ in~$\pi$.  For the probability matrix~$P$ of an ideal model, the probabilities of position $2j - 1$ are both~$\frac{1}{2}$, and the probabilities of position~$2j$ are $1 - \frac{1}{n}$ (conditional on a prior~$1$) and~$\frac{1}{n}$ (conditional on a prior~$0$). Note that, when sampling a solution with an ideal model, the bit sampled at position~$2j$ is sampled conditional on the bit at position $2j - 1$. Due to the choice of~$P$, this probability is maximized. Choosing~$\frac{1}{2}$ as the value of \emph{both} probabilities of position $2j - 1$ further ensures two things: (1) The bit at position $2j - 1$ is sampled independently of the bit at position $2j - 2$.\footnote{For this to hold, it suffices that both probabilities of position $2j - 1$ are the same; they do not have to be~$\frac{1}{2}$.} (2) Block~$j$ is $00$ or $11$ with equal probability. Overall, an ideal model has maximal equal probability to sample an optimum. We now discuss features that help in assessing whether a model is close to an ideal model or not.

In an ideal model, the probability that a generated bit string is one of the $2^{n/2}$ optima is $2^{n/2} \big(\frac{1}{2} (1 - \frac{1}{n})\big)^{n/2} = \big((1 - \frac{1}{n})^{n}\big)^{1/2}$\!. Using that $\lim_{n \to \infty} (1 - \frac{1}{n})^n = \frac{1}{\mathrm{e}}$, the probability of MIMIC to sample an optimal solution, given an optimal model, is roughly $1 / \sqrt{e} \approx 60.65\,\%$. However, note that the probability of~$1/\sqrt{\mathrm{e}}$ of sampling an optimum is, by itself, \emph{not} indicative of an ideal model. This probability is also achieved by any other model which is like an ideal model but has the following difference: for each block~$j$ (defined as above), the probabilities at position $2j - 1$ are equal but not necessarily~$\frac{1}{2}$. Given such a model, the probability to sample \emph{any} optimum is still~$1/\sqrt{\mathrm{e}}$. However, the probability to sample a \emph{specific} optimum may differ from optimum to optimum. Consequently, we also consider a second indicator for an ideal model.

The property of an ideal model that \emph{each} optimum has the same probability of being sampled makes it unlikely that such a model creates duplicate solutions in $m \in \N^+$ independent tries. More formally, for an optimal model, since each optimum is equally likely, the probability that all optima are distinct when sampling~$m$ optimal solutions is $(2^{n/2})!/\big(2^{m n/2} \cdot (2^{n/2} - m)!\big)$, by the birthday paradox. This probability is at least $(1 - m/2^{n/2})^m \geq 1 - m^2/2^{n/2}$, by Bernoulli's inequality, which is close to~$1$ as long as $m^2 = o(2^{n/2})$.

We conclude from these insights that a good model of MIMIC should sample optima with a probability of roughly $1/\sqrt{\mathrm{e}}$ and that it should not sample duplicates, with high probability.

\section{Results}
\label{sec:experiments}

In this section, we show that MIMIC creates models in reasonable time that behave similarly to an ideal model for \ubm. We first explain our setup, then we discuss our results.

\subsection{Algorithm Setup}

We use MIMIC as seen in \Cref{alg:mimic} with truncation selection (with uniform tie-breaking) and with $\lambda = \lfloor 12 n \ln n \rfloor$ and $\mu = \lfloor \lambda / 8 \rfloor$.
%
Our choice for~$\lambda$ is based on a grid search for the $n$-factor in the interval $[1, 20]$ with a step size of~$1$. The value~$12$ was the first with which MIMIC found an optimum in all runs of our test setup (see also \Cref{sec:setup}). For~$\mu$, we chose a constant fraction of~$\lambda$, which is common for EDAs.


\subsection{Test Setup}
\label{sec:setup}

We are interested in determining how well MIMIC is capable of generating a probabilistic model that implicitly captures an exponential number of optima of \ubm. Consequently, we use our insights from \Cref{sec:idealModel} in order to determine how good a model of MIMIC is. To this end, we let MIMIC run for a number of iterations~$I$, which we explain below, and we determine
\begin{enumerate}
    \item the probabilistic model (that is~$\pi$ and~$P$) in each iteration,
    \item with what probability optimal solutions are created in each iteration, and
    \item how many distinct optima are created in~$I$ iterations.
\end{enumerate}

Our choice of~$I$ is as follows: let~$T$ denote the number of iterations until MIMIC samples an optimum for the first time. Then we let the algorithm run for~$T$ more iterations, that is, $I = 2T$. Since MIMIC might fail finding an optimum in a reasonable time, we abort a run if the number of iterations exceeds $50\,000$ iterations. However, we chose~$\lambda$ and~$\mu$ such that \emph{all} of our tests were successful. That is, MIMIC always found an optimum, and we let the algorithm run for~$2T$ iterations.

We consider MIMIC for values of~$n$ from~$50$ to~$200$ in steps of~$10$. For each value of~$n$, we start~$100$ independent runs. For each run, we record the number of iterations until the first optimum is sampled (that is,~$T$), the set of \emph{all} optima that are found in each of the $2T$ total iterations (which may include duplicates), the number of optima found in each of these iterations, as well as the probabilistic model in each iteration. Note that with this data we are able to compute the information above we are interested in.

\subsubsection{Visualization}
\label{sec:visualziation}
We depict our results in \Cref{fig:runTime,fig:numberOfOptima,fig:probabilityOfOptimum,fig:extremeValues} and in \Cref{tab:firstBlockProbabilities,tab:model}. In these plots, we visualize:
\begin{enumerate}
    \item the total number of iterations and fitness evaluations,
    \item how the probabilistic model evolves during a run,
    \item the number of optima found as well the number of runs that only found distinct optima, and
    \item the probability of sampling an optimum during an iteration.
\end{enumerate}

For each figure, we plot the data of all $100$~runs (per~$n$) simultaneously in a concise manner: we depict the median of the data as a point and connect the medians with a solid line. Further, we depict the mid $50\,\%$ (that is, ranks~$25$ to~$75$ when ordering the runs) as a shaded area bounded by a dotted line. We provide more information about the visualization in the discussion of our results.

\subsection{Discussion}
\label{sec:dicussion}
In this section, we discuss the results depicted in \Cref{fig:runTime,fig:numberOfOptima,fig:probabilityOfOptimum,fig:extremeValues} and in \Cref{tab:firstBlockProbabilities,tab:model}.

\subsubsection{Run time}

\begin{figure*}
    \begin{subfigure}[t]{0.45\textwidth}
        \includegraphics[width = \textwidth]{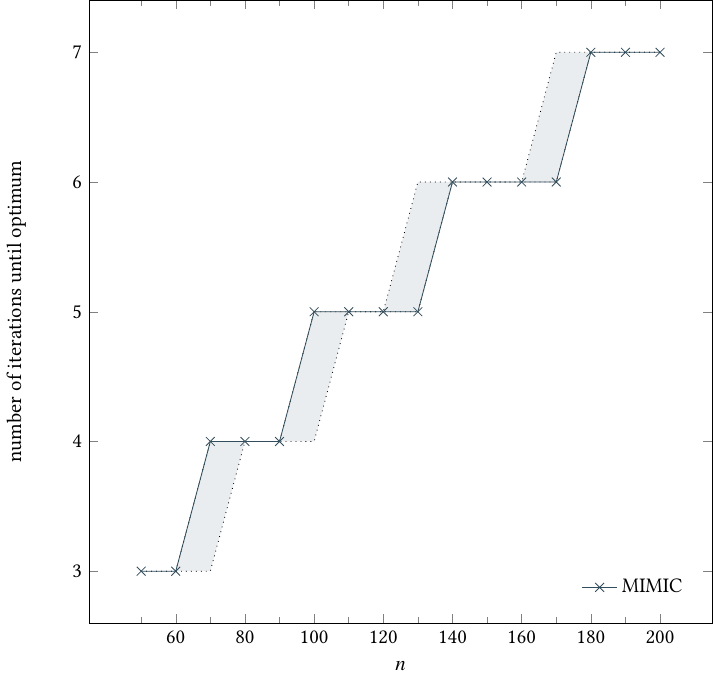}
        \caption{\label{fig:runTimeIterations}The number of iterations it took each of the $100$ runs per~$n$ until an optimum was found for the first time (that is, $T$).}
    \end{subfigure}
    \hfil
    \begin{subfigure}[t]{0.45\textwidth}
        \includegraphics[width = \textwidth]{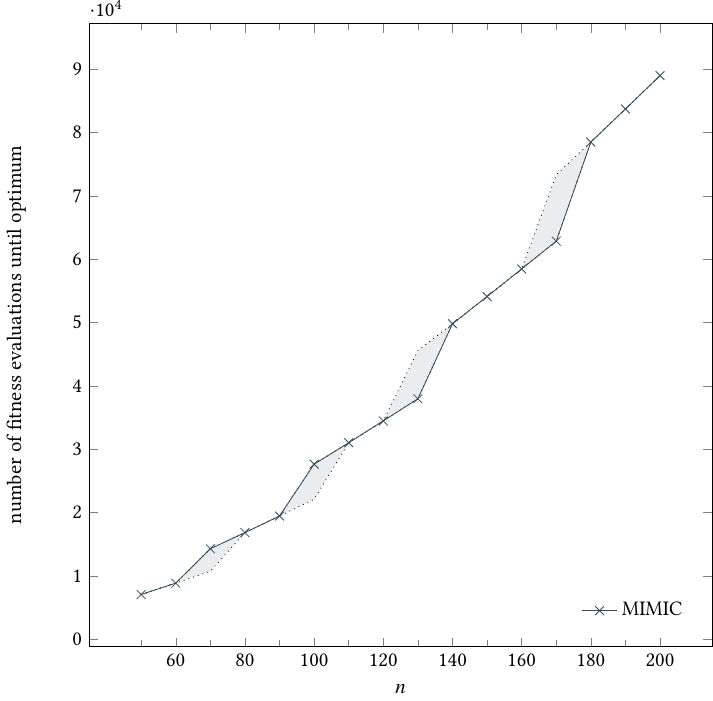}
        \caption{\label{fig:runTimeFitness}The number of fitness evaluations it took each of the $100$ runs per~$n$ until an optimum was found for the first time (that is, $\lambda T$).}
    \end{subfigure}
    \caption{\label{fig:runTime}Two depictions of the run time of MIMIC optimizing \ubm. For information about the type of plot used, please refer to \Cref{sec:visualziation}. For a discussion of these plots, please refer to \Cref{sec:runTime}.}
\end{figure*}

\label{sec:runTime}
\Cref{fig:runTime} shows the run time of each of the $100$ runs per~$n$ with respect to the number of iterations (\Cref{fig:runTimeIterations}) and with respect to the number of fitness function evaluations (\Cref{fig:runTimeFitness}).

The number of iterations depicted is the number of iterations of each run until an optimum was found for the first time. That is, the number of iterations corresponds to~$T$, as explained in \Cref{sec:setup}. For each change in the number of iterations (for example, at $n = 70$), there is one value of~$n$ that has a high variance (the shaded area), and many runs take either the number of iterations of the previous value of~$n$ or an extra iteration. Except for these transitions, the run time of MIMIC is enormously consistent, with the mid $50\,\%$ all taking the same number of iterations. Overall, the number of iterations slightly increases with~$n$.

The number of fitness function evaluations provides a better picture on how long MIMIC takes for a run. Note that the numbers shown in \Cref{fig:runTimeFitness} are the numbers from \Cref{fig:runTimeIterations} times~$\lambda$, as MIMIC performs~$\lambda$ fitness evaluations in each iteration. The reason that the curve is not constant when the number of iterations stays the same for different values of~$n$ is that we chose $\lambda = \lfloor 12 n \ln n \rfloor$, which grows in~$n$. Thus, depending on how~$T$ grows in~$n$, the total run time of MIMIC on \ubm is at least in the order of $n \ln n$.

\subsubsection{Probabilistic model}
\label{sec:probabilisticModel}


\begin{figure}
    \includegraphics[width = \columnwidth]{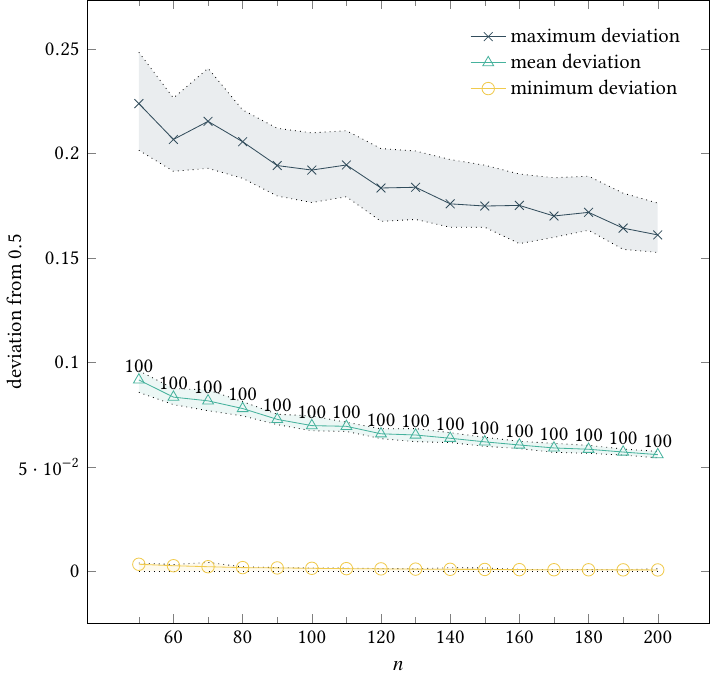}
    \caption{\label{fig:extremeValues}Depicted are the maximum, mean, and minimum of the deviation of the central probabilities in~$P$ from~$0.5$ in iteration~$2T$. The numbers over the plot with the triangles denote the number of runs (out of~$100$) that have a correct permutation in their model. For a discussion of this plot, please refer to \Cref{sec:probabilisticModel}.}
\end{figure}

\begin{figure*}
    \begin{subfigure}[t]{0.45\textwidth}
        \includegraphics[width = \textwidth]{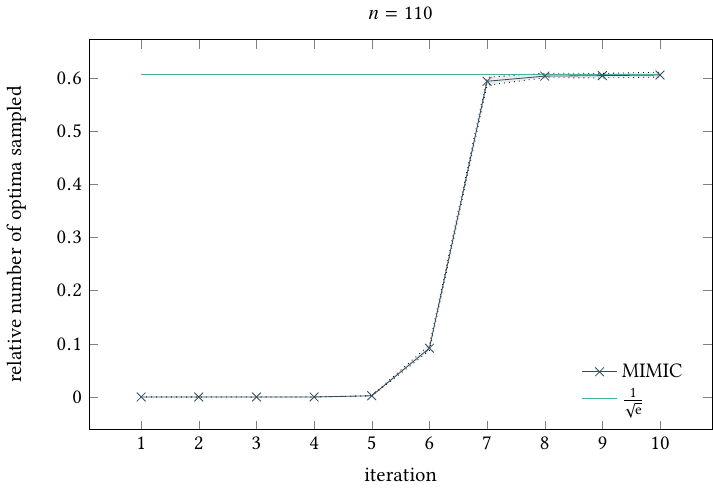}
        \caption{\label{fig:probabilityOfOptimum110}The relative number of optima in an iteration for $n = 110$.}
    \end{subfigure}
    \hfil
    \begin{subfigure}[t]{0.45\textwidth}
        \includegraphics[width = \textwidth]{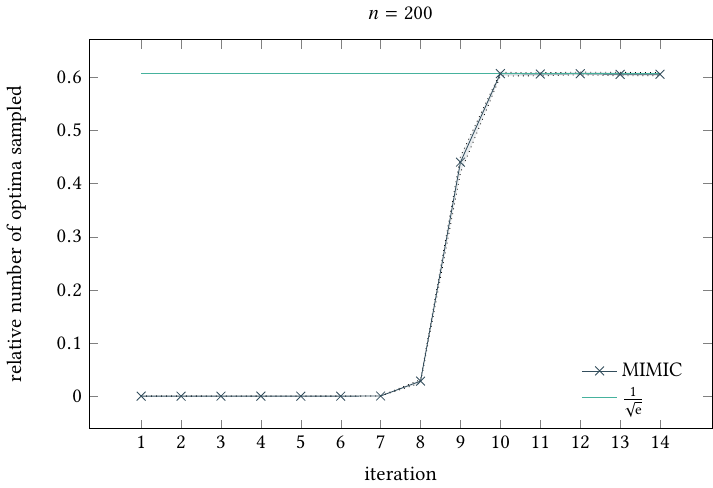}
        \caption{\label{fig:probabilityOfOptimum200}The relative number of optima in an iteration for $n = 200$.}
    \end{subfigure}
    \caption{\label{fig:probabilityOfOptimum}Depicted are how the relative number of optima sampled evolves over the number of iterations for MIMIC optimizing \ubm. The horizontal line at the top shows the value $1/\sqrt{\mathrm{e}} \approx 60.65\,\%$, which is roughly the probability of sampling an optimum in a single iteration, given an ideal model of MIMIC for \ubm (see \Cref{sec:idealModel}). For a discussion of these plots, please refer to \Cref{sec:numberOfOptima}.}
\end{figure*}

\begin{figure}
    \includegraphics[width = \columnwidth]{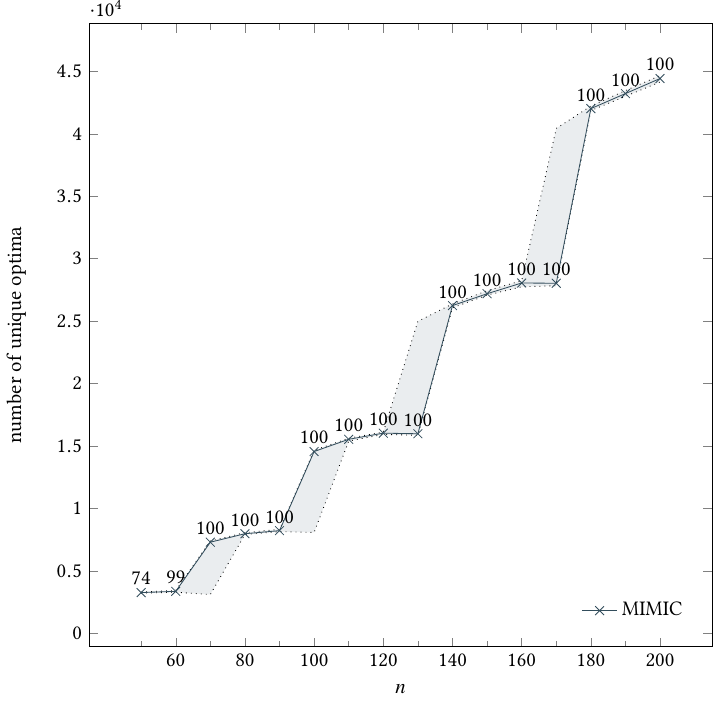}
    \caption{\label{fig:numberOfOptima}The number of distinct optima that MIMIC found when optimizing \ubm. An optimum is distinct if it was only sampled once during a single run. The number over each data point states how many of the $100$ runs sampled \emph{exclusively} distinct optima. For a discussion of these plots, please refer to \Cref{sec:numberOfOptima}.}
\end{figure}

\begin{table}
    \centering
    \caption{\label{tab:model}The probabilities of the first $10$ positions (occurring in~$\pi$) of MIMIC optimizing \ubm for one of the runs with $n = 200$, at iteration~$2T$. For a discussion of this table, please refer to \Cref{sec:probabilisticModel}.}
    \begin{tabular}{lrr}
        position~$i$ & $P_{i, 0}$ & $P_{i, 1}$\\ \midrule
        17 & 0.662052 & 0.662052\\
        18 & 0.005 & 0.995\\
        90 & 0.636872 & 0.610266\\
        89 & 0.005 & 0.995\\
        41 & 0.649587 & 0.58435\\
        42 & 0.005 & 0.995\\
        49 & 0.68438 & 0.546488\\
        50 & 0.005 & 0.995\\
        104 & 0.582677 & 0.613208\\
        103 & 0.005 & 0.995\\
        & $\vdots$ &
    \end{tabular}
\end{table}

\begin{table}
    \centering
    \caption{\label{tab:firstBlockProbabilities}The probabilities of the positions~$1$ and~$2$ (occurring in~$\pi$) of MIMIC optimizing \ubm for one of the runs with $n = 200$, over all iterations. For a discussion of this table, please refer to \Cref{sec:probabilisticModel}.}
        \begin{tabular}{lrrrr}
            iteration & $P_{1, 0}$ & $P_{1, 1}$ & $P_{2, 0}$ & $P_{2, 1}$\\ \midrule
            1 & 0.5 & 0.5 & 0.5 & 0.5 \\
            2 & 0.533825 & 0.454897 & 0.428928 & 0.587039 \\
            3 & 0.311688 & 0.665446 & 0.542926 & 0.485564 \\
            4 & 0.51094 & 0.458128 & 0.256098 & 0.789337 \\
            5 & 0.141582 & 0.828571 & 0.535533 & 0.478152 \\
            6 & 0.474968 & 0.453086 & 0.115023 & 0.903664 \\
            7 & 0.465116 & 0.519018 & 0.0794045 & 0.943806 \\
            8 & 0.465385 & 0.478368 & 0.0381406 & 0.98 \\
            9 & 0.450299 & 0.486737 & 0.00945626 & 0.995 \\
            10 & 0.440618 & 0.480589 & 0.005 & 0.995 \\
            11 & 0.005 & 0.995 & 0.442663 & 0.462725 \\
            12 & 0.005 & 0.995 & 0.470277 & 0.424279 \\
            13 & 0.406593 & 0.460972 & 0.005 & 0.995 \\
            14 & 0.455733 & 0.421111 & 0.005 & 0.995
        \end{tabular}
\end{table}

\Cref{fig:extremeValues} and \Cref{tab:firstBlockProbabilities,tab:model} showcase information about the probabilistic model of MIMIC and its quality with respect to an ideal model (see also \Cref{sec:idealModel}). For a comparison to make sense, it is important that the permutation~$\pi$ of a model of MIMIC is close to that of an ideal model~-- ideally, $\pi$ would correspond to a permutation of an ideal model. To this end, we say that a permutation~$\pi$ is \emph{correct} if, starting from the first position, its positions occur in pairs of two such that (1) the positions in each pair differ by exactly~$1$ and that (2) the maximum of the positions of each pair is an even number. Note that the set of all correct permutations corresponds exactly to that of all ideal models.\footnote{Property (2) is necessary, since \ubm defines its block with respect to position~$1$. For example, positions~$1$ and~$2$ form a block in \ubm, but positions~$2$ and~$3$ do not.}

In \Cref{tab:model}, we show an excerpt of the model from one out of the $100$~runs of MIMIC on \ubm in the last iteration, that is, $2T$. In total, we mention $10$ entries from the model (out of $200$). The first column depicts the bit positions as they occur in the permutation~$\pi$. We see that all entries occur as they would in a correct permutation, suggesting that the entire permutation is correct. Note that the order of the positions per pair appears randomly, which makes sense, as the order does not matter for sampling a block in \ubm correctly.

The other two columns of \Cref{tab:model} show the two probabilities of the position from the first column. We see that, for each pair of positions (as defined above), the first position has its probabilities close to~$0.5$ and second one has its probabilities at the borders of the interval $[\frac{1}{n}, 1 - \frac{1}{n}]$. Further, the probabilities at the borders are at the correct end for maximizing the probability of sampling a block in \ubm correctly. That is, the probability $P_{i, 0}$ is at~$\frac{1}{n}$ (making it likely to sample a~$0$ when the previous position sampled at~$0$), and the probability $P_{i, 1}$ is at $1 - \frac{1}{n}$. Overall, the results from \Cref{tab:model} already suggest that MIMIC builds a model close to an ideal one.

In \Cref{fig:extremeValues}, we have a closer look at how closely the model of MIMIC in iteration~$2T$ resembles an ideal model. In order for such a comparison to make sense, we first analyze how well the permutation of such a model deviates from the permutation of an ideal model. Out of \emph{all} of our runs, \emph{each} run produced a correct permutation in iteration~$2T$. We depict these numbers in \Cref{fig:extremeValues} over the curve in the middle, with the triangles. Thus, the only way for a model of MIMIC to deviate from an ideal model is in how largely the probabilities in~$P$ deviate from those of an ideal model.

When comparing probabilities of~$P$ to that of an ideal model, we group the probabilities into those that should be close to~$0.5$ (the \emph{central} probabilities) and into those that should be close to the borders (the \emph{border} probabilities). We may also use the respective adjective for a position in order to indicate that both of the probabilities are central or border. We group the probabilities with respect to the blocks in~$\pi$. In order to determine which position of each block is central and which is border, we look at the probability with the highest deviation from~$0.5$ (breaking ties uniformly at random). The position with the probability that has the highest deviation is considered border, the other position is considered central.

We then calculate the absolute distance of each probability to its ideal value. For the central probabilities, we calculate their distance to~$0.5$ (regardless of whether the probability is conditional on a~$0$ or a~$1$). For the border probabilities conditional on a~$0$, we calculate their distance to~$\frac{1}{n}$, and for those conditional on a~$1$, we calculate their distance to $1 - \frac{1}{n}$. Afterward, for the two groups of central and border probabilities, we calculate, for each of the positions per run and value of~$n$, the maximum, mean, and minimum of the deviations of each probability.

The results of these calculations for the central probabilities are depicted in \Cref{fig:extremeValues}. The arguably most interesting result is the maximum deviation among all positions of a single run. This value seems to decrease with increasing~$n$. However, a deviation of about~$0.2$ can still be considered rather large. We discuss in the following sections how this affects the quality of the model.

The deviations for the border probabilities are not depicted, as the maximum over all runs and all values of~$n$ was in the order of $10^{-6}$. This suggests that the border probabilities are always very close to the borders in iteration~$2T$.

Since we only looked at the model of MIMIC in iteration~$2T$, \Cref{tab:firstBlockProbabilities} provides an excerpt of how the probabilities of the first block evolve over the iterations. We depict data from one of the runs with $n = 200$. From iteration~$10$ to~$11$ and from~$12$ to~$13$, we see that the probabilities of the positions~$1$ and~$2$ change their statuses of being central or border. This makes sense, as we already briefly discussed, as the order of the positions in a block does not matter for sampling a correct block. Given a correct block, it is then random which position MIMIC determines to be the first in its permutation (and, thus, central) and which it chooses next (being border). Thus, we conclude that MIMIC does not converge to a single model that is close to an ideal model but instead switches between different models from iteration to iteration.

\subsubsection{Similarity to an ideal model}
\label{sec:numberOfOptima}

Since the results so far suggest that the model of MIMIC is close to an ideal model except for the deviation of the central probabilities (see \Cref{sec:probabilisticModel}), we now consider how well the model reflects the two properties of an ideal model that we describe in \Cref{sec:idealModel}. We start with the probability to sample an optimum in each iteration.

\Cref{fig:probabilityOfOptimum} shows how many of the solutions of the~$\lambda$ solutions sampled during each iteration are optima. We chose to depict this ratio for the cases of $n = 110$ and $n = 200$, which are cases where all of the mid $50\,\%$ of the runs used the same number of iterations (see also \Cref{fig:runTimeIterations}). This data can be interpreted as the probability of sampling an optimum in each iteration. Following our ideas discussed in \Cref{sec:idealModel}, we also depict the value $1/\sqrt{\mathrm{e}}$ in both plots, which represents the probability to sample an optimum, given an ideal model.

Both \Cref{fig:probabilityOfOptimum110,fig:probabilityOfOptimum200} show that the empirical ratio is surprisingly close to the ideal value. This suggests that the model behaves similarly to an ideal model in terms of consistently sampling optima, despite the central probabilities sometimes deviating somewhat largely from~$0.5$ (see \Cref{fig:extremeValues}). The fact that some data points show a ratio that is slightly higher than the theoretical optimum is due to the variance in the randomness of the algorithm.

\Cref{fig:numberOfOptima} shows how many of the optima that MIMIC found per run were distinct as well as how many runs only found distinct optima.
Note that the sudden increases in the number of optima found relate to the number of iterations depicted in \Cref{fig:runTimeIterations}.
Except for the cases $n = 50$ and $n = 60$, all optima that MIMIC found per run were distinct. This result is remarkable and suggests that MIMIC builds a very general model that is capable of sampling a huge variety of different solutions.

We now argue that it is not unlikely for the cases $n = 50$ and $n = 60$ to have runs that failed to only find distinct optima. In \Cref{sec:idealModel} we derived a lower bound on how likely it is to have \emph{no} duplicate in~$m$ samples. In a similar fashion, one can derive an upper bound (using that, for $a \leq b$, $a!/(a - m)! \lesssim (a - \frac{m}{2})^m$ and that, for $x \in [0, 1]$, $(1 - x)^m \approx \mathrm{e}^{-xm}$) of roughly $\mathrm{e}^{-m^2/2^{n/2 + 1}}$. Thus, a lower bound of having a duplicate in~$m$ tries is roughly at least $1 - \mathrm{e}^{-m^2/2^{n/2 + 1}}$. For $n = 60$, using that~$4$ out of~$6$ iterations are used for sampling optima and that about $1.7 \cdot 10^4$ solutions are created in a run (retrieved from the data used for \Cref{fig:runTime}), we get that the probability for a run to have a duplicate optimum is about $6\,\%$, which means that we would expect about $6$ failures. For $n = 70$, the probability to have a duplicate optimum drops already below $1\,\%$.\footnote{This estimation makes the assumption that the model is ideal in~$4$ out of~$6$ iterations. However, data similar to that depicted in \Cref{fig:probabilityOfOptimum} suggests that it takes at least one iteration until the model samples optima consistently.}

Overall, the results from \Cref{fig:numberOfOptima,fig:probabilityOfOptimum} suggest that the model of MIMIC behaves similarly to an ideal model. We thus consider it to actually be similar to an ideal model.

\section{Theoretical Analyses}
\label{sec:theory}

We now show mathematically rigorously that all univariate EDAs perform poorly on \ubm. More specifically, we show that a univariate model (i)~has a very small probability of sampling an optimum of \ubm at all, or (ii)~with very high probability samples only in a ball of logarithmic radius. This forbids any performance close to what we showed for MIMIC.
%
%
We note that this argument holds for \emph{all} univariate EDAs since it only refers to their probabilistic models, but not to the specific algorithm. The probabilistic model of a univariate EDA~$A$, when optimizing bit strings of length $n \in \N^+$, is fully characterized by a probability vector $p \in [0, 1]^n$, commonly called the \emph{frequency vector} of~$A$.
We denote the component of~$p$ at position $i \in [n]$ by~$p_i$, and we refer to it as a \emph{frequency}.
The EDA~$A$ creates a bit string $x \in \{0, 1\}^n$ via~$p$ by, for all $i \in [n]$, setting $x_i = 1$ with probability~$p_i$ and $x_i = 0$ with probability $1 - p_i$, independently of any other random choices.
Thus, for all $y \in \{0, 1\}^n$, it holds that
\begin{align*}
    \Pr[x = y] = \prod_{\substack{i \in [n]\colon\\ y_i = 0}} (1 - p_i) \cdot \prod_{\substack{i \in [n]\colon\\y_i = 1}} p_i\ .
\end{align*}
We write $x \sim \mathrm{sample}(p)$ to say that $x \in \{0,1\}^n$ is sampled according to the univariate model~$p$.

To prove our result, we make use of the a large-deviation bound due to Hoeffding~\cite{Hoeffding63}, here given in the version of~{\cite[Theorems~1.10.1 and 1.10.21]{Doerr20bookchapter}}.

\begin{theorem}
    \label{thm:chernoff_larger}
    Let $n \in \N$, $\delta \in \R_{> 0}$, and let~$X$ be the sum of~$n$ independent random variables, each taking values in $[0, 1]$. Let $\mu^+ \ge E[X]$. Then
    \begin{align*}
        \Pr\big[X \geq (1 + \delta) \mu^+ \big] \leq \exp\left(- \frac 13 \delta \mu^+ \right)\ .
    \end{align*}
\end{theorem}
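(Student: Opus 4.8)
The plan is to prove this by the standard exponential-moment (Chernoff) method; the only nonstandard feature is that the summands take arbitrary values in $[0,1]$ rather than being Bernoulli, so I have to be slightly careful with the per-variable estimate. Write $X = \sum_{i \in [n]} X_i$ with the $X_i$ independent and $[0,1]$-valued. First I would fix an arbitrary $t > 0$ and apply Markov's inequality to the nonnegative random variable $\mathrm{e}^{tX}$, giving $\Pr[X \ge (1+\delta)\mu^+] = \Pr[\mathrm{e}^{tX} \ge \mathrm{e}^{t(1+\delta)\mu^+}] \le \mathrm{e}^{-t(1+\delta)\mu^+}\,\E[\mathrm{e}^{tX}]$. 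This reduces the task to bounding the moment-generating function $\E[\mathrm{e}^{tX}]$ and then choosing $t$ well.

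For the moment-generating function I would use independence to factorize, $\E[\mathrm{e}^{tX}] = \prod_{i \in [n]} \E[\mathrm{e}^{tX_i}]$. The key estimate is a per-variable bound from convexity: since $x \mapsto \mathrm{e}^{tx}$ is convex, on $[0,1]$ it lies below the chord through its endpoints, so $\mathrm{e}^{tx} \le 1 + x(\mathrm{e}^t - 1)$ for all $x \in [0,1]$. Taking expectations and using $X_i \in [0,1]$ yields $\E[\mathrm{e}^{tX_i}] \le 1 + \E[X_i](\mathrm{e}^t - 1) \le \exp\!\big(\E[X_i](\mathrm{e}^t - 1)\big)$, where the last step is $1 + y \le \mathrm{e}^y$. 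This chord argument is exactly where the general $[0,1]$-range (rather than a two-point support) is handled. Multiplying over $i$ and using $\sum_i \E[X_i] = \E[X]$ gives $\E[\mathrm{e}^{tX}] \le \exp\!\big((\mathrm{e}^t - 1)\E[X]\big)$. Because $t > 0$ forces $\mathrm{e}^t - 1 > 0$, I may replace $\E[X]$ by its upper bound $\mu^+$ without reversing the inequality, obtaining $\E[\mathrm{e}^{tX}] \le \exp\!\big((\mathrm{e}^t - 1)\mu^+\big)$. Combined with the first step this leaves $\Pr[X \ge (1+\delta)\mu^+] \le \exp\!\big(\mu^+((\mathrm{e}^t - 1) - t(1+\delta))\big)$.

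It remains to optimize $t$ and simplify. Differentiating the exponent shows it is minimized at $t = \ln(1+\delta) > 0$, which produces the classical form $\Pr[X \ge (1+\delta)\mu^+] \le \big(\mathrm{e}^\delta/(1+\delta)^{1+\delta}\big)^{\mu^+} = \exp\!\big(\mu^+(\delta - (1+\delta)\ln(1+\delta))\big)$. The final, and in my view most delicate, step is the purely analytic simplification to the clean stated bound, namely verifying an elementary inequality of the shape $(1+\delta)\ln(1+\delta) - \delta \ge \tfrac{1}{3}\delta$. I expect this to be the main obstacle, because — unlike the preceding probabilistic steps — it is sensitive to the regime of $\delta$: the linear-in-$\delta$ savings $\tfrac{1}{3}\delta\mu^+$ is the natural large-deviation bound (comfortable for, e.g., $\delta \ge 1$), whereas in the small-$\delta$ regime the correct savings are quadratic, of order $\delta^2\mu^+$. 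This is presumably why the result is assembled from the two companion statements in the cited source. I would settle the required inequality by a short monotonicity argument, studying $f(\delta) = (1+\delta)\ln(1+\delta) - \tfrac{4}{3}\delta$ via $f'(\delta) = \ln(1+\delta) - \tfrac{1}{3}$ over the relevant range of $\delta$.
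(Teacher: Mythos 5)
Your probabilistic machinery is fine, and it is in fact the standard route: the paper itself gives no proof of this theorem at all, but states it as a quoted version of Hoeffding's bound (citing Theorems~1.10.1 and~1.10.21 of Doerr's book chapter), and the proof in that source is exactly your exponential-moment argument --- Markov applied to $\mathrm{e}^{tX}$, factorization by independence, the chord bound $\mathrm{e}^{tx} \le 1 + x(\mathrm{e}^t - 1)$ on $[0,1]$ (which, as you say, is where the non-Bernoulli range and also the replacement of $\E[X]$ by $\mu^+$ are handled, since $\mathrm{e}^t - 1 > 0$), and the choice $t = \ln(1+\delta)$, yielding $\Pr[X \ge (1+\delta)\mu^+] \le \exp\big(\mu^+(\delta - (1+\delta)\ln(1+\delta))\big)$.

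The genuine gap is the final analytic step, and you correctly sensed that it is the dangerous one: the inequality $(1+\delta)\ln(1+\delta) - \delta \ge \tfrac13\delta$ is simply \emph{false} for small $\delta$ (at $\delta = \tfrac12$ the left side is $\approx 0.108$ while the right side is $\approx 0.167$), and your own proposed monotonicity check exposes this, since $f'(\delta) = \ln(1+\delta) - \tfrac13 < 0$ near $0$, so $f(\delta) = (1+\delta)\ln(1+\delta) - \tfrac43\delta$ is negative on roughly $(0, 0.82)$. No repair of this step is possible, because the theorem as printed in the paper (for all $\delta > 0$) is itself false: take $X \sim \mathrm{Binomial}(n, \tfrac12)$, $\mu^+ = \E[X] = n/2$, $\delta = n^{-1/2}$; then $(1+\delta)\mu^+ = \E[X] + \sigma$ with $\sigma = \sqrt{n}/2$, so by the central limit theorem $\Pr[X \ge (1+\delta)\mu^+] \to 1 - \Phi(1) \approx 0.16$, while the claimed bound $\exp(-\sqrt{n}/6) \to 0$. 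The source the paper cites actually asserts $\Pr[X \ge (1+\delta)\mu^+] \le \exp\big(-\tfrac13\min\{\delta^2,\delta\}\,\mu^+\big)$; the ``$\min$'' was dropped in the paper's restatement (the two cited theorems are, by the way, the Chernoff bound and the $\mu^+$-replacement lemma, not the two $\delta$-regimes). This does not harm the paper, which only invokes the theorem with $\delta \ge 1$ in the proof of its main result --- and on that range your argument closes cleanly: $f(1) = 2\ln 2 - \tfrac43 > 0$ and $f'(\delta) = \ln(1+\delta) - \tfrac13 \ge \ln 2 - \tfrac13 > 0$ for $\delta \ge 1$, so $f \ge 0$ there. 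The correct resolution is therefore to weaken the statement to the $\min\{\delta^2,\delta\}$ form (or restrict it to $\delta \ge 1$), after which your proof is complete.
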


For all $n \in \N^+$ and for all $x, y \in \{0, 1\}^n$, let $\hammingDistance{x}{y} = |\{i \in [n] \mid x_i \neq y_i\}|$, that is, the Hamming distance of~$x$ and~$y$. With these preparations, we can formally state the main result of this section.

\begin{theorem}
    \label{thm:number_of_optima_univariate}
    Let $n \in \N^+$ with~$n$ being even, and let~$p$ be the length-$n$ frequency vector of a univariate EDA~$A$. Let $x \sim \mathrm{sample}(p)$ and let~$E$ denote the event that~$x$ is an optimum of~\ubm.
    Assume that there is a $k \in \R^+$ such that $\Pr[E] \geq n^{-k}$.
    Last, let $z = (\lfloor 1/2 + p_i\rfloor)_{i \in [n]}$.
		Then for all $\gamma \ge 4k$, we have
		\[\Pr[\hammingDistance{x}{z} \ge \gamma \ln(n)] \le n^{-\gamma/6}.\]
\end{theorem}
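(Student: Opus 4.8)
The plan is to exploit that, under a univariate model, the Hamming distance $\hammingDistance{x}{z}$ is a sum of independent indicators whose mean is tightly controlled by $\Pr[E]$. First I would set $q_i = \min(p_i, 1 - p_i)$ for each $i \in [n]$ and observe that $z_i = \mathds{1}\{p_i \ge 1/2\}$ is exactly the more likely outcome at position~$i$, so that $\Pr[x_i \ne z_i] = q_i$. Since the bits are sampled independently, $\hammingDistance{x}{z} = \sum_{i \in [n]} \mathds{1}\{x_i \ne z_i\}$ is a sum of $n$ independent random variables taking values in $\{0, 1\}$, with $\E[\hammingDistance{x}{z}] = \sum_{i \in [n]} q_i$. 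The whole argument then reduces to (a) bounding this mean from above by $O(k \ln n)$ and (b) feeding the result into the Hoeffding--Chernoff bound of \Cref{thm:chernoff_larger}.

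The heart of the proof is step~(a), relating $\sum_{i \in [n]} q_i$ to $\Pr[E]$. Because the blocks of \ubm are disjoint and the bits are independent, $\Pr[E]$ factors as a product over the $n/2$ blocks, where $\Pr[\text{block } j \text{ correct}] = p_{2j-1}p_{2j} + (1 - p_{2j-1})(1 - p_{2j})$. The key estimate I would prove is the per-block inequality
\[
    \Pr[\text{block } j \text{ correct}] \le 1 - \tfrac{1}{2}(q_{2j-1} + q_{2j}) \le \mathrm{e}^{-\frac{1}{2}(q_{2j-1} + q_{2j})}.
\]
Writing $u = q_{2j-1}$ and $v = q_{2j}$ (both in $[0, 1/2]$), the first inequality reduces to elementary algebra in two short cases, according to whether the two frequencies lie on the same side of $1/2$ (so that $z$ restricted to the block is already correct) or on opposite sides: in the first case the block probability equals $1 - u - v + 2uv$ and the claim is equivalent to $4uv \le u + v$; in the second it equals $u + v - 2uv$ and the claim is equivalent to $\tfrac{3}{2}(u + v) - 2uv \le 1$, which holds since the left-hand side is maximized, at value~$1$, when a frequency equals~$1/2$. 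The second inequality is just $1 - t \le \mathrm{e}^{-t}$. Multiplying the per-block bound over all blocks yields $\Pr[E] \le \exp\!\left(-\tfrac{1}{2}\sum_{i \in [n]} q_i\right)$, and combining this with the hypothesis $\Pr[E] \ge n^{-k}$ gives $\sum_{i \in [n]} q_i \le 2k \ln n$.

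For step~(b) I would apply \Cref{thm:chernoff_larger} to $X = \hammingDistance{x}{z}$ with $\mu^+ = 2k \ln n$, which is a legitimate upper bound on $\E[X]$ by step~(a). Choosing $\delta = \gamma/(2k) - 1$ makes $(1 + \delta)\mu^+ = \gamma \ln n$, and the assumption $\gamma \ge 4k$ guarantees $\delta \ge 1 > 0$, so the theorem applies and gives $\Pr[\hammingDistance{x}{z} \ge \gamma \ln n] \le \exp\!\left(-\tfrac{1}{3}\delta\mu^+\right) = n^{-(\gamma - 2k)/3}$. A final check shows that $(\gamma - 2k)/3 \ge \gamma/6$ is equivalent to $\gamma \ge 4k$, which is exactly the stated hypothesis, so the bound simplifies to $n^{-\gamma/6}$, as claimed.

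The main obstacle I anticipate is the per-block inequality, and in particular the opposite-side case: there $z$ is itself incorrect on the block, the correct-block probability can be as large as $1/2$ rather than close to~$1$, and one must verify that the slack term $2uv$ still suffices to absorb the factor $1 - \tfrac{1}{2}(u + v)$. Keeping the constant $\tfrac{1}{2}$ in the exponent exact is precisely what later forces the threshold $\gamma \ge 4k$ and the exponent $-\gamma/6$, so the bookkeeping between these constants is the part requiring the most care.
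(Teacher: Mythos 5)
Your proposal is correct and takes essentially the same route as the paper: both arguments establish the key bound $\Pr[E] \le \exp\left(-\tfrac{1}{2}\sum_{i \in [n]} \min(p_i, 1 - p_i)\right) = \exp\left(-\tfrac{1}{2}\|p - z\|_1\right)$, deduce $\E\left[\hammingDistance{x}{z}\right] \le 2k\ln(n)$ from $\Pr[E] \ge n^{-k}$, and finish with \Cref{thm:chernoff_larger}, your explicit choice $\delta = \gamma/(2k) - 1$ being exactly the paper's renaming $\gamma = 4\delta k$ read in reverse. The only difference is how the per-block estimate is verified --- your direct two-case analysis in $q_{2j-1}, q_{2j}$ versus the paper's chain of AM--GM, $1 + x \le \mathrm{e}^x$, $(a+b)^2/2 \le a^2 + b^2$, and $p_i(1 - p_i) \ge |p_i - z_i|/2$ --- and both derivations yield the same constant $\tfrac{1}{2}$ in the exponent.
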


\begin{proof}
    Let $X = \hammingDistance{x}{z}$, and note that~$X$ follows a Poisson binomial law with~$n$ trials where trial $i \in [n]$ has success probability $s_i \coloneqq \max\{p_i, 1 - p_i\}$.
    Thus, it follows that
    \begin{align}
        \label{eq:poisson_binomial_expectation}
        \E[X] = \sum\nolimits_{i \in [n]} s_i = \sum\nolimits_{i \in [n]} |p_i - z_i| = \|p - z\|_1\ .
    \end{align}

    We bound~$\E[X]$ from above and then apply \Cref{thm:chernoff_larger} to~$X$, bounding with high probability the Hamming distance between $z$ and a solution generated from~$p$.
    To this end, for all $j \in [n/2]$, let $q_j = (p_{2j - 1} + p_{2j})/2$.

    Recall that by the inequality of arithmetic and geometric mean, for all $a, b \in \R$, it holds that $ab \leq \big((a + b)/2\big)^2$.
    Using this inequality, the definition of~$E$, as well as that, for all $x \in \R$, it holds that $1 + x \leq \mathrm{e}^{x}$, we get
    \begin{align*}
        \Pr[E] &= \prod_{j \in [n/2]} \big(p_{2j - 1}p_{2j} + (1 - p_{2j - 1})(1 - p_{2j})\big)\\
        &\leq \prod_{j \in [n/2]} \big(q_j^2 + (1 - q_j)^2\big)\\
        &= \prod_{j \in [n/2]} \big(1 - 2q_j(1 - q_j)\big)\\
        &\leq \exp\Big(- \sum\nolimits_{j \in [n/2]} 2q_j(1 - q_j)\Big)\ .
    \end{align*}
    Further, since, for all $a, b \in \R$, it holds that $(a + b)^2 / 2 \leq a^2 + b^2$, it follows that
    \begin{align*}
        (a + b)\left(1 - \frac{a + b}{2}\right) &= a + b - \frac{(a + b)^2}{2}\\
        &\geq a + b - (a^2 + b^2)\\
        &= a(1 - a) + b(1 - b)\ .
    \end{align*}
    Using this inequality and the definition of~$q$, we further estimate
    \begin{align*}
        \Pr[E] &\leq \exp\bigg(-\sum_{j \in [n/2]} \big(p_{2j - 1}(1 - p_{2j - 1}) + p_{2j}(1 - p_{2j})\big)\bigg)\\
        &= \exp\Big(-\sum\nolimits_{i \in [n]} p_i(1 - p_i)\Big)\ .
    \end{align*}
    In addition, by the definition of~$z$, for all $i \in [n]$, a case distinction with respect to whether $p_i < 1/2$ or not shows that $p_i(1 - p_i) \geq |p_i - z_i|/2$.
    Thus, we get the estimate
    \begin{align*}
        \Pr[E] &\leq \exp\Big(-\sum\nolimits_{i \in [n]} \frac{|p_i - z_i|}{2}\Big)\\
        &= \exp\left(-\frac{\|p - z\|_1}{2}\right)\ .
    \end{align*}
    By the assumption $\Pr[E] \geq n^{-k}$ and \cref{eq:poisson_binomial_expectation}, it follows that
    \begin{align*}
        2 k \ln(n) \geq \|p - z\|_1 = \E[X] \ .
    \end{align*}

    Last, by \Cref{thm:chernoff_larger}, for all $\delta \ge 1$, it holds that
    \begin{align*}
        \Pr\big[X \geq (1 + \delta) \cdot 2 k \ln(n)\big] & \leq \exp\left(- \frac 23 \delta k \ln n \right)\\
				& = n^{- \frac 23 \delta k}.
    \end{align*}
		By using that $2\delta \ge (1+\delta)$ when $\delta \ge 1$ and renaming $\gamma \coloneqq 4\delta k$, we obtain the claim
		$\Pr[X \ge \gamma \ln(n)] \le n^{-\gamma/6}$ for all $\gamma \ge 4k$.
\end{proof}

\Cref{thm:number_of_optima_univariate} provides a strong connection between the probability of the event~$E$ that a sample~$x$ is an optimum of \ubm and the rounded frequency vector~$z$.
If it is somewhat likely for~$x$ to be optimal, that is, if there is a $k = O(1)$ such that $\Pr[E] \geq n^{-k}$, then~$x$ differs from~$z$, with high probability, in at most $4k\ln(n)$ bits.
Then each of the next $m \in \N^+$ bit strings created by~$p$, with probability at least $1 - m n^{-2k/3}$, differs from~$z$ in at most $4k\ln(n)$ positions.
These are at most $2k\ln(n)$ correct blocks, leading to at most $2^{2k\ln(n)} = n^{k\ln(4)}$ different optima, which is a polynomial independent of~$m$.
Thus, increasing the number of samples---although this decreases the probability of this line of argument to hold---does not increase the maximum number of different optima potentially created.
Ultimately, the number of different optima created over a polynomial number of iterations (in each of which~$p$ can even be chosen adversarially) with polynomially many samples in each iteration is still only polynomial, with high probability.

For the case that there is no~$k$ such that $\Pr[E] \geq n^{-k}$ or for the case that there is only a $k = \omega(1)$, then the probability to create a single optimal solution is already too small to be considered good, and the probabilistic model of the univariate EDA does not reflect the optima of \ubm well at all.

\section{Conclusion}
\label{sec:conclusion}

We proposed the \ubm benchmark as a test problem to see how well EDAs can develop a probabilistic model that copes with several different good solutions. We showed that MIMIC efficiently generates a probabilistic model that behaves very similarly to an ideal model. Since \ubm exhibits an exponential number of optima, this suggests that MIMIC is capable of implicitly storing a large range of different solutions in its model. Our experiments show that the model that MIMIC generates over time
\begin{itemize}
    \item has a permutation and border probabilities (almost) as in an ideal model, that the model
    \item does not create duplicate optimal solutions with increasing input size, and that it
    \item samples optima in each iteration with a probability that is close to the theoretical optimum of $1/\sqrt{\mathrm{e}}$.
\end{itemize}
Looking at sample data about the probabilistic model further suggests that the model is built such that it can generate an exponential number of optima. This is impressive, since this model is generated in a reasonable amount of time. We note that we used the plain MIMIC as found in the first paper proposing it~\cite{BonetIV96MIMIC} without any modifications.

In contrast, we show via mathematical means that no univariate model can come close to the advantages of these bivariate models. Whenever a univariate model is good enough to sample an optimum of \ubm with probability at least $n^{-k}$, then with very high probability all its samples lie in a Hamming ball of radius $4k\ln n$. It thus has enormous difficulties to sample most of the optima, which all lie outside this Hamming ball.

For future research, it is interesting to see if MIMIC also builds good models on more complicated functions with multiple optima, such as vertex cover on bipartite graphs. Further, since MIMIC has a very restricted type of bivariate model (namely, a path), considering other bivariate EDAs with a greater range of models, such as the bivariate marginal distribution algorithm (\cite{PelikanM99BMDA}; working on trees), would provide insights into whether the restriction of MIMIC's model to a path is a hindrance or not.

\section*{Acknowledgments}
This work was supported by COST action CA15140, by a public grant as part of the Investissement d'avenir project, reference ANR-11-LABX-0056-LMH, LabEx LMH, as well as by the Paris Île-de-France Region via the DIM RFSI AlgoSelect project and via the European Union's Horizon 2020 research and innovation program under the Marie Skłodowska-Curie grant agreement No. 945298-ParisRegionFP.

\bibliographystyle{elsarticle-num-names}
\balance
\bibliography{references}

\end{document}